\newtheorem{theorem}{Theorem}
\newtheorem{lemma}{Lemma}[theorem]
\title{A Note on the Chernoff Bound for Random Variables in the\\ Unit Interval}
\author{Andrew Y.~K.~Foong\textsuperscript{1}, Wessel P.~Bruinsma\textsuperscript{1, 2}, David R.~Burt\textsuperscript{1}\\
\textsuperscript{1}University of Cambridge, \textsuperscript{2}Invenia Labs\\
\texttt{\{ykf21, wpb23, drb62\}@cam.ac.uk}}
\date{May 2022}
\begin{document}

\maketitle

\begin{abstract}
    The Chernoff bound is a well-known tool for obtaining a high probability bound on the expectation of a Bernoulli random variable in terms of its sample average. This bound is commonly used in statistical learning theory to upper bound the generalisation risk of a hypothesis in terms of its empirical risk on held-out data, for the case of a binary-valued loss function. However, the extension of this bound to the case of random variables taking values in the unit interval is less well known in the community. In this note we provide a proof of this extension for convenience and future reference.
\end{abstract}

\section{Introduction}

In statistical learning theory, %it is common to have the following set-up:
one commonly considers a hypothesis space $\mathcal{H}$ and a probability measure $D$ over a space of datapoints $\mathcal{Z}$.
% Let $\mathcal{H}$ denote our hypothesis space.
Let $\ell\colon \mathcal{H} \times \mathcal{Z} \to \{0, 1\}$ be a loss function taking values in $\{0, 1\}$;
\textit{i.e.}, $\ell$ is a binary loss.
It is often of interest to bound the generalisation risk $R(h) \coloneqq \mathbbm{E}_{Z \sim D}[\ell(h, Z)]$ of a hypothesis $h \in \mathcal{H}$.
% We are often interested in the generalisation risk of a predictor $h \in \mathcal{H}$, i.e.~$R(h) := \mathbbm{E}_{z \sim D}[\ell(h, z)]$.
Such a bound can be established by computing the empirical mean of the loss on a dataset $S \sim D^N$ and using the following well-known theorem:
% This can be estimated by evaluating the mean loss on a sampled  dataset $S \sim D^n$, and using the following well-known bound: 

\begin{theorem}[Chernoff bound for binary random variables; \citealp{langford2005tutorial}, Corollary 3.7]
\label{thm:chernoff-bound-binary}
Let $X_1, \hdots, X_n$ be i.i.d.~random variables with $X_i \in \{0, 1\}$ and $\mathbbm{E}[X_i] = p$. Let $\overline{X} := \frac{1}{n} \sum_{i=1}^n X_i$.
Then, with probability at least $1-\delta$,
\begin{equation}
    p \leq \mathrm{kl}^{-1}\left(\overline{X} \, \middle| \, \frac{1}{n} \log \frac{1}{\delta} \right),
\end{equation}
where $\mathrm{kl}(q,p) \coloneqq q \log \frac{q}{p} + (1-q) \log \frac{1-q}{1-p}$ and $\mathrm{kl}^{-1}(q \,|\, c) \coloneqq \sup \, \{ p \in [0,1] : \mathrm{kl}(q, p) \leq c \}$.
\end{theorem}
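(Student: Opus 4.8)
The plan is to show that the complementary ``bad'' event has probability at most $\delta$, and to reduce this to a classical Cram\'er--Chernoff lower-tail inequality. Write $c \coloneqq \frac{1}{n}\log\frac{1}{\delta}$, so that $\delta = e^{-nc}$, and observe first that the claimed inequality can only fail when the true mean $p$ exceeds the empirical mean $\overline{X}$: indeed $\mathrm{kl}(\overline{X}, \overline{X}) = 0 \le c$, so $\mathrm{kl}^{-1}(\overline{X} \mid c) \ge \overline{X}$, and hence $p \le \overline{X}$ already implies the bound.

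First I would record the analytic properties of $\mathrm{kl}$ needed to unpack $\mathrm{kl}^{-1}$. For fixed $q$, the map $p' \mapsto \mathrm{kl}(q, p')$ is convex, vanishes at $p' = q$, and is strictly increasing on $[q, 1]$; hence on the event $\{p > \overline{X}\}$ the inequality $p \le \mathrm{kl}^{-1}(\overline{X} \mid c)$ is equivalent to $\mathrm{kl}(\overline{X}, p) \le c$. Symmetrically, for fixed $p$ the map $a \mapsto \mathrm{kl}(a, p)$ is strictly decreasing on $[0, p]$, so the bad event $\{\overline{X} < p\} \cap \{\mathrm{kl}(\overline{X}, p) > c\}$ is contained in $\{\overline{X} \le \underline{a}\}$, where $\underline{a} \le p$ is the unique solution of $\mathrm{kl}(\underline{a}, p) = c$; if no such root exists in $[0,p]$ the bad event is empty and there is nothing to prove.

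The analytic heart is the tail bound $\mathbbm{P}[\overline{X} \le a] \le e^{-n\,\mathrm{kl}(a, p)}$ for $a \le p$. I would obtain this by the exponential Markov inequality: for $\lambda \ge 0$,
\[
\mathbbm{P}[\overline{X} \le a] = \mathbbm{P}\!\left[e^{-\lambda \sum_{i=1}^n X_i} \ge e^{-\lambda n a}\right] \le e^{\lambda n a}\, \prod_{i=1}^n \mathbbm{E}\!\left[e^{-\lambda X_i}\right] = \left(e^{\lambda a}\bigl(1 - p + p e^{-\lambda}\bigr)\right)^n,
\]
using independence and the Bernoulli moment generating function. Minimising the bracket over $\lambda \ge 0$ (the optimum is $e^{-\lambda} = \frac{a(1-p)}{p(1-a)}$, which is admissible precisely because $a \le p$) yields exactly $e^{-\mathrm{kl}(a,p)}$, giving the claim.

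Combining the two pieces finishes the proof: the bad event is contained in $\{\overline{X} \le \underline{a}\}$, so its probability is at most $e^{-n\,\mathrm{kl}(\underline{a}, p)} = e^{-nc} = \delta$. I expect the main obstacle to be not the Chernoff estimate---whose optimisation is routine---but the careful bookkeeping around $\mathrm{kl}^{-1}$: establishing the monotonicity and convexity facts, handling the boundary cases $\overline{X} \in \{0,1\}$ or $p \in \{0,1\}$ (where $\mathrm{kl}$ may be infinite), and confirming that the strict versus non-strict inequalities line up so that the reduction to $\{\overline{X} \le \underline{a}\}$ is valid.
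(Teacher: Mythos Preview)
Your proposal is correct. The paper does not give a self-contained proof of this theorem (it is cited from Langford), but your argument coincides with the strategy the paper uses to prove the more general \cref{thm:chernoff-bound-interval}: reduce the bad event $p > \mathrm{kl}^{-1}(\overline{X}\mid c)$ to a lower-tail event $\{\overline{X} \le \underline a\}$ via the monotonicity of $p'\mapsto\mathrm{kl}(q,p')$ on $[q,1]$ and of $a\mapsto\mathrm{kl}(a,p)$ on $[0,p]$, then bound that tail by Chernoff. The only differences are presentational: the paper packages your case split ``$p\le\overline X$ is trivial / $p>\overline X$ is the interesting case'' into a one-sided divergence $\mathrm{kl}_m$, and it quotes the tail inequality $\Pr(\overline X\le a)\le e^{-n\,\mathrm{kl}(a,p)}$ as Hoeffding's extension (\cref{lem:hoeffding}) rather than deriving it from the Bernoulli moment generating function as you do. Your direct MGF computation is of course exact for Bernoulli $X_i$, and specialises the convexity argument the paper alludes to for $[0,1]$-valued variables.
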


By choosing $X_i = \ell(h, Z_i)$ to be the loss of hypothesis $h$ on the $i$\textsuperscript{th} datapoint $Z_i$ in $S$, we immediately obtain a high-probability upper bound on the generalisation risk, assuming that $h$ does not depend on $S$. (\textit{I.e.}, if $h$ is the result of training algorithm which uses training data, then $S$ must be held-out data.) 

\Cref{thm:chernoff-bound-binary} is well-known and, as stated, requires $X_i$ to be Bernoulli random variables.
However, if the loss $\ell$ takes values in the unit interval $[0, 1]$ rather than in $\{0, 1\}$, \textit{i.e.} $\ell: \mathcal{H} \times \mathcal{Z} \to [0, 1]$, then we require an extension of \cref{thm:chernoff-bound-binary} to the case of $X_i$ taking values in $[0, 1]$:

% However, it can be extended the case of losses $\ell$ takes values in the unit interval, :

\begin{theorem}[Chernoff bound for random variables in the unit interval]
\label{thm:chernoff-bound-interval}
Let $X_1, \hdots, X_n$ be i.i.d.~random variables with $X_i \in [0,1]$ and $\mathbbm{E}[X_i] = p$.
Then, using the same notation as in \cref{thm:chernoff-bound-binary}, with probability at least $1-\delta$, 
\begin{align}
    p \leq \mathrm{kl}^{-1}\left(\overline{X} \, \middle| \, \frac{1}{n} \log \frac{1}{\delta} \right).
\end{align}
\end{theorem}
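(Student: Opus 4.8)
The plan is to reduce \cref{thm:chernoff-bound-interval} to the Bernoulli case by observing that the only property of the $X_i$ used in deriving the Chernoff bound is their moment generating function, and that a $[0,1]$-valued random variable with mean $p$ has its moment generating function dominated by that of a $\mathrm{Bernoulli}(p)$ variable. Concretely, I would first unpack the failure event. Writing $c = \frac{1}{n}\log\frac{1}{\delta}$ and recalling that $p \mapsto \mathrm{kl}(q, p)$ is convex with minimum $0$ at $p = q$, the set $\{p : \mathrm{kl}(\overline{X}, p) \le c\}$ is an interval containing $\overline{X}$, so $\mathrm{kl}^{-1}(\overline{X} \mid c)$ is its upper endpoint. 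Hence the event $p > \mathrm{kl}^{-1}(\overline{X}\mid c)$ coincides with $\{\overline{X} < q_0\}$, where $q_0 < p$ is the lower solution of $\mathrm{kl}(q_0, p) = c$. It therefore suffices to prove the one-sided bound $\Pr[\overline{X} \le q] \le e^{-n\,\mathrm{kl}(q, p)}$ for every $q < p$, since taking $q = q_0$ and using $e^{-nc} = \delta$ yields the claim.

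The crux is the moment generating function domination, which I would obtain purely from convexity. For any $t \in \mathbb{R}$ the map $x \mapsto e^{tx}$ is convex, so on $[0,1]$ it lies below the chord joining its endpoints, $e^{tx} \le (1-x) + x e^{t}$. Taking expectations and using $\mathbb{E}[X_i] = p$ gives $\mathbb{E}[e^{t X_i}] \le 1 - p + p e^{t}$, which is exactly the moment generating function of a $\mathrm{Bernoulli}(p)$ variable. This single inequality is the only place where the structure of the $X_i$ enters, and it expresses that among all $[0,1]$-valued laws with mean $p$ the Bernoulli law is extremal.

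With the domination in hand I would run the standard Cram\'er argument for the lower tail. For $t \le 0$, Markov's inequality gives $\Pr[\overline{X} \le q] = \Pr[e^{t\sum_i X_i} \ge e^{tnq}] \le e^{-tnq}\,(\mathbb{E}[e^{tX_1}])^n \le \big(e^{-tq}(1 - p + p e^{t})\big)^n$. Minimising the exponent over $t \le 0$ is a routine calculus exercise whose optimiser is $e^{t} = \frac{q(1-p)}{p(1-q)} < 1$ (valid precisely because $q < p$), and substituting back yields $\Pr[\overline{X} \le q] \le e^{-n\,\mathrm{kl}(q,p)}$. This is identical to the bound underlying \cref{thm:chernoff-bound-binary}, so combining it with the reduction of the first paragraph completes the proof.

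The main obstacle, I expect, is not any single computation but setting up the reduction cleanly. A tempting shortcut, randomised rounding (drawing $Y_i \mid X_i \sim \mathrm{Bernoulli}(X_i)$ so that $\overline{Y}$ is an unbiased $\{0,1\}$-valued surrogate for $\overline{X}$), does not work as a black box, since rounding can move $\overline{Y}$ to either side of the threshold $q_0$ and gives no pointwise comparison of the two failure events. One is thus forced onto the moment generating function route; once the chord bound is established the rest is mechanical, and the only care needed is to handle the one-sided direction (an upper bound on $p$, equivalently the lower tail of $\overline{X}$) and to check that the optimisation recovers precisely the rate function $\mathrm{kl}(q,p)$.
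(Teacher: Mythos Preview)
Your proof is correct and follows essentially the same approach as the paper: both reduce the event $p \le \mathrm{kl}^{-1}(\overline{X}\mid c)$ to the lower-tail event $\overline{X} \ge q_0$ (the paper phrases this via a one-sided function $\mathrm{kl}_m$ and the quantity $\underline{\mathrm{kl}}(p,c)$, you via the lower solution $q_0$ directly), and both then invoke the Hoeffding--Chernoff tail bound $\Pr[\overline{X}\le q]\le e^{-n\,\mathrm{kl}(q,p)}$ for $q<p$. The only real difference is that the paper outsources that tail bound to \cite{mulzer2018five} whereas you supply the MGF-domination (chord bound) and Cram\'er optimisation explicitly, making your version more self-contained.
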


Although losses $\ell$ taking values in $[0, 1]$ are commonly encountered, \cref{thm:chernoff-bound-interval} is somewhat less well known than \cref{thm:chernoff-bound-binary}.
In this note, we provide a proof of \cref{thm:chernoff-bound-interval} for convenience and future reference.

% result for losses in the unit interval is somewhat less well known than \cref{thm:chernoff-bound-binary}, we provide a proof of \cref{thm:chernoff-bound-interval} for convenience.

\section{Two-Sided Chernoff Bound}

We first recapitulate some well-known bounds based on Hoeffding's extension of the Chernoff bound:

\begin{lemma}[Hoeffding's extension]\label{lem:hoeffding}
Let $X_1, \hdots, X_n$ be i.i.d.~random variables with $X_i \in [0,1]$ and $\mathbbm{E}[X_i] = p$. Then for any $t \in [0, p]$, 
\begin{align} \label{eqn:chernoff-up}
    \mathrm{Pr}(\overline{X} \leq p - t) &\leq \exp(-n\mathrm{kl}(p-t, p)), 
\end{align}
and for any $t \in [0, 1-p]$,
\begin{align} \label{eqn:chernoff-down}
    \mathrm{Pr}(\overline{X} \geq p + t) &\leq \exp(-n\mathrm{kl}(p+t, p)).
\end{align}
\end{lemma}
\begin{proof}
The proof is an %straightforward
application of the Chernoff method along with the observation that $z \mapsto e^{\lambda z}$ is convex, which allows us to control the moment-generating function in terms of the moment-generating function of a Bernoulli random variable. A detailed proof of the upper bound in \cref{eqn:chernoff-up} is given in Theorem 5.1 of \cite{mulzer2018five}. \Cref{eqn:chernoff-down} follows by applying the change of variables $X_i \to 1-X_i$ to \cref{eqn:chernoff-up} (see Corollary 4.1 in \cite{mulzer2018five} for an identical change of variables argument in the binary case).
%
%Detailed proofs of \cref{eqn:chernoff-up,eqn:chernoff-down} can be found by combining Theorem 5.1 and Corollary 4.1 by \cite{mulzer2018five}.
\end{proof}

We can use \cref{lem:hoeffding} to obtain a \emph{two-sided} bound on the mean $p$.

\begin{theorem}[Two-sided Chernoff bound for random variables in the unit interval]\label{thm:chernoff-interval-two-sided}
Let $X_1, \hdots, X_n$ be i.i.d.~random variables with $X_i \in [0,1]$ and $\mathbbm{E}[X_i] = p$.
Then, with probability at least $1-\delta$,
\begin{align}
    % p \leq \mathrm{kl}^{-1}\left(\overline{X} \, \middle| \, \frac{1}{n} \log \frac{2}{\delta} \right).
    \mathrm{kl}\left(\overline{X},p \right) \le \frac{1}{n} \log \frac{2}{\delta}.
\end{align}
\end{theorem}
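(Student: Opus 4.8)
The plan is to reduce the two-sided bound to the two one-sided tail bounds in \cref{lem:hoeffding} by way of a union bound, exploiting the convexity and monotonicity of the map $q \mapsto \mathrm{kl}(q, p)$. Write $c \coloneqq \frac{1}{n} \log \frac{2}{\delta}$, so that $\exp(-nc) = \delta/2$, and observe that it suffices to show $\mathrm{Pr}(\mathrm{kl}(\overline{X}, p) > c) \le \delta$, since on the complementary event the claimed inequality $\mathrm{kl}(\overline{X}, p) \le c$ holds.

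First I would split the bad event according to the sign of $\overline{X} - p$. Since $\mathrm{kl}(p, p) = 0 \le c$, the event $\{\mathrm{kl}(\overline{X}, p) > c\}$ is contained in the union of $A \coloneqq \{\overline{X} < p \text{ and } \mathrm{kl}(\overline{X}, p) > c\}$ and $B \coloneqq \{\overline{X} > p \text{ and } \mathrm{kl}(\overline{X}, p) > c\}$. The structural fact I would rely on is that $q \mapsto \mathrm{kl}(q, p)$ is continuous, strictly decreasing on $[0, p]$ and strictly increasing on $[p, 1]$, with minimum value $0$ attained at $q = p$.

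Next I would convert each of $A$ and $B$ into a one-sided tail event via a threshold. On the left, if $c > \mathrm{kl}(0, p)$ then $A$ is empty; otherwise continuity and strict monotonicity furnish a unique $q_- \in [0, p]$ with $\mathrm{kl}(q_-, p) = c$, and on $A$ the inequality $\mathrm{kl}(\overline{X}, p) > c = \mathrm{kl}(q_-, p)$ forces $\overline{X} < q_-$. Applying \cref{eqn:chernoff-up} with $t = p - q_-$ then gives $\mathrm{Pr}(A) \le \mathrm{Pr}(\overline{X} \le q_-) \le \exp(-n\,\mathrm{kl}(q_-, p)) = \exp(-nc) = \delta/2$. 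The mirror argument, using \cref{eqn:chernoff-down} and a threshold $q_+ \in [p, 1]$, bounds $\mathrm{Pr}(B) \le \delta/2$. A union bound yields $\mathrm{Pr}(\mathrm{kl}(\overline{X}, p) > c) \le \delta$, which is the claim; the factor of $2$ inside the logarithm is exactly the price paid for combining the two tails.

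I expect the main obstacle to lie in the bookkeeping around the thresholds rather than in any single idea: one must confirm the monotonicity and continuity of $\mathrm{kl}(\cdot, p)$, dispatch the degenerate cases $p \in \{0, 1\}$ and $c > \mathrm{kl}(0, p)$ (respectively $c > \mathrm{kl}(1, p)$) in which a threshold fails to exist and the corresponding event is empty, and verify that the strict-versus-nonstrict inequalities align so that \cref{lem:hoeffding} is invoked with a valid $t$ in its stated range.
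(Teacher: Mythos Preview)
Your argument is correct and essentially identical to the paper's: both proofs use the monotonicity of $q\mapsto\mathrm{kl}(q,p)$ on each side of $p$ to define thresholds $q_-,q_+$ (the paper denotes them $\underline{\mathrm{kl}}(p,c)$ and $\overline{\mathrm{kl}}(p,c)$), apply the two one-sided tail bounds from \cref{lem:hoeffding} to obtain $\delta/2$ on each side, and combine via a union bound. The only difference is cosmetic---you bound the bad event from above while the paper bounds the good event from below---and you are in fact more careful than the paper about the edge cases where a threshold fails to exist.
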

\begin{proof}
For $c > 0$, let $\overline{\mathrm{kl}} (p, c)$ be the unique real number in $(p, 1]$ such that $\mathrm{kl} (\overline{\mathrm{kl}} (p, c),  p) = c$. Similarly, let $\underline{\mathrm{kl}} (p, c)$ be the unique real number in $[0, p)$ such that  $\mathrm{kl} (\underline{\mathrm{kl}} (p, c), p) = c$. 
Then, $\mathrm{kl}(\overline{X} , p) \leq c$ if and only if $\underline{\mathrm{kl}} (p, c) \leq \overline{X} \leq \overline{\mathrm{kl}} (p, c)$. Hence,
\begin{equation}
    \mathrm{Pr}\left(\mathrm{kl}(\overline{X} , p) \leq \frac{1}{n} \log \frac{2}{\delta} \right) = \mathrm{Pr}\left( \underline{\mathrm{kl}} \left(p, \frac{1}{n} \log \frac{2}{\delta} \right) \leq \overline{X} \leq \overline{\mathrm{kl}} \left(p, \frac{1}{n} \log \frac{2}{\delta} \right) \right). 
\end{equation}
But, from \cref{eqn:chernoff-up}, we have
\begin{equation}
    \mathrm{Pr} \left(\underline{\mathrm{kl}} \left(p, \frac{1}{n} \log \frac{2}{\delta} \right) \leq \overline{X} \right)
    \geq
    1 - \exp \left(-n\mathrm{kl}\left( \underline{\mathrm{kl}} \left(p, \frac{1}{n} \log \frac{2}{\delta} \right) ,  p\right) \right)
    % 1 - e^{-n\mathrm{kl}\left( \underline{\mathrm{kl}} \left(p, \frac{1}{n} \log \frac{2}{\delta} \right) ,  p\right)}
    = 1 - \frac{\delta}{2}.
\end{equation}
A symmetric argument shows that
\begin{equation}
    \mathrm{Pr}\left(\overline{X} \leq \overline{\mathrm{kl}} \left(p, \frac{1}{n} \log \frac{2}{\delta} \right) \right) \geq 1 - \frac{\delta}{2}.
\end{equation}
Using a union bound therefore implies that
\begin{equation}
    \mathrm{Pr}\left(\mathrm{kl}(\overline{X} , p) \leq \frac{1}{n} \log \frac{2}{\delta} \right) \geq 1 - \delta,
\end{equation}
which proves the theorem.
\end{proof}

This \emph{two-sided} bound on the mean $p$ nearly implies the desired \emph{one-sided} bound from \cref{thm:chernoff-bound-interval}:
\begin{equation} \label{eq:implication}
    \mathrm{kl}\left(\overline{X} \, \middle| \,p \right) \le \frac{1}{n} \log \frac{2}{\delta}
    \implies 
    p \leq \mathrm{kl}^{-1}\left(\overline{X} \, \middle| \, \frac{1}{n} \log \frac{2}{\delta} \right).
\end{equation}
Unfortunately, on the right-hand side, \cref{eq:implication} states $2/\delta$ instead of $1/\delta$.

\section{One-Sided Chernoff Bound}

We can tighten the $2/\delta$ in \cref{eq:implication} to obtain the $1/\delta$ from \cref{thm:chernoff-bound-interval} by directly proving a one-sided bound on the mean $p$.

\begin{proof}
The main ingredient of the proof of \cref{thm:chernoff-interval-two-sided} is the equivalence $\mathrm{kl}(\overline{X}, p) \leq c$ if and only if $\underline{\mathrm{kl}} (p, c) \leq \overline{X} \leq \overline{\mathrm{kl}} (p, c)$.
Since we now desire only an upper bound on $p$, $\mathrm{kl}(\overline{X}, p) \leq c$ is stronger than we need.
The key insight is to define a \emph{one-sided} version of $\mathrm{kl}$:
\begin{equation}
        \mathrm{kl}_m(q, p)= 
    \begin{cases}
        \mathrm{kl}(q, p ),& q\leq p.\\
        0,              & q > p.
    \end{cases}
\end{equation}
An upper bound on $p$ is then equivalent to an upper bound on $\mathrm{kl}_m(\overline{X}, p)$:
\begin{equation}
    p \le \mathrm{kl}^{-1}\left(\overline{X}\,\middle|\, \frac{1}{n} \log \frac{1}{\delta}\right)
    \iff \mathrm{kl}_m(\overline{X}, p) \le \frac{1}{n} \log \frac{1}{\delta}.
\end{equation}
To see this, note that
\begin{equation}
    \sup \left\{  p \in [0,1] : \mathrm{kl}(\overline{X}, p ) \leq \frac{1}{n} \log \frac{1}{\delta} \right\}
    = \sup \left\{  p \in [0,1] : \mathrm{kl}_m(\overline{X}, p ) \leq \frac{1}{n} \log \frac{1}{\delta} \right\}
\end{equation} 
because, in both suprema, $\overline{X} \le p$ always. The same approach is taken in the proof of the one-sided bound for Bernoulli random variables in \cref{thm:chernoff-bound-binary}, see the proof of Lemma 3.6 in \cite{langford2005tutorial}.

Analogously to the proof of \cref{thm:chernoff-interval-two-sided}, the main ingredient of this proof is the equivalence $\mathrm{kl}_m(\overline{X}, p ) \leq c$ if and only if $\underline{\mathrm{kl}}(p,c) \leq \overline{X}$, so
% Using the same notation as in \cref{thm:chernoff-interval-two-sided}, we have that $\mathrm{kl}_m(\overline{X}, p ) \leq c$ iff $\underline{\mathrm{kl}}(p,c) \leq \overline{X}$. Hence
we conclude by the fact that the latter holds with probability at least $1 - \delta$:
\begin{equation}
    \mathrm{Pr} \left( \mathrm{kl}_m(\overline{X} , p ) \leq \frac{1}{n} \log \frac{1}{\delta}  \right) = \mathrm{Pr} \left( \underline{\mathrm{kl}}\left(p, \frac{1}{n} \log \frac{1}{\delta} \right) \leq \overline{X} \right)
    \geq 1 - \delta.
\end{equation}
%
% this immediately yields, with probability at least $1-\delta$, 
% \begin{align}
%     p &\leq \sup \, \left\{  p \in [0,1] : \mathrm{kl}_m(\overline{X}, p ) \leq \frac{1}{n} \log \frac{1}{\delta} \right\} \\
%     &= \sup \, \left\{  p \in [0,1] : \mathrm{kl}(\overline{X}, p ) \leq \frac{1}{n} \log \frac{1}{\delta} \right\},
% \end{align} 
% which concludes the proof for \cref{thm:chernoff-bound-interval}.
\end{proof}

\bibliographystyle{apalike}
\bibliography{sample}

\begin{thebibliography}{}

\bibitem[Langford, 2005]{langford2005tutorial}
Langford, J. (2005).
\newblock Tutorial on practical prediction theory for classification.
\newblock {\em Journal of machine learning research}, 6(3).

\bibitem[Mulzer, 2018]{mulzer2018five}
Mulzer, W. (2018).
\newblock Five proofs of {C}hernoff's bound with applications.
\newblock {\em arXiv preprint arXiv:1801.03365}.

\end{thebibliography}

\end{document}